\DeclareMathOperator*{\argmax}{arg\,max}
\DeclareMathAlphabet{\mathsfit}{\encodingdefault}{\sfdefault}{m}{sl}
\SetMathAlphabet{\mathsfit}{bold}{\encodingdefault}{\sfdefault}{bx}{n}
\newtheorem{theorem}{Theorem}[section]
\newtheorem{lemma}[theorem]{Lemma}
\newtheorem{prop}[theorem]{Proposition}
\newtheorem{definition}{Definition}[section]
\title{Inferring dynamic regulatory interaction graphs from time series data with perturbations}
\author{%
Dhananjay Bhaskar\\
Yale University\\
\texttt{dhananjay.bhaskar@yale.edu} \\
\And
Sumner Magruder \\
Yale University \\
\texttt{sumner.magruder@yale.edu} \\
\And
Edward De Brouwer \\
Yale University \\
\texttt{edward.debrouwer@yale.edu} \\
\And
Aarthi Venkat \\
Yale University \\
\texttt{aarthi.venkat@yale.edu} \\
\And
Frederik Wenkel \\
Mila, Université de Montréal \\
\texttt{frederik.wenkel@umontreal.ca} \\
\And
Guy Wolf \\
Mila, Université de Montréal \\
\texttt{wolfguy@mila.quebec} \\
\And
Smita Krishnaswamy\footnote{Corresponding Author} \\
Yale University \\
\texttt{smita.krishnaswamy@yale.edu}
}
\begin{document}

\maketitle

\begin{abstract}

Complex systems are characterized by intricate interactions between entities that evolve dynamically over time. Accurate inference of these dynamic relationships is crucial for understanding and predicting system behavior. In this paper, we propose \emph{Regulatory Temporal Interaction Network Inference (RiTINI)} for inferring time-varying interaction graphs in complex systems using a novel combination of space-and-time graph attentions and graph neural ordinary differential equations (ODEs). RiTINI leverages time-lapse signals on a graph prior, as well as perturbations of signals at various nodes in order to effectively capture the dynamics of the underlying system. This approach is distinct from traditional causal inference networks, which are limited to inferring acyclic and static graphs. In contrast, RiTINI can infer cyclic, directed, and time-varying graphs, providing a more comprehensive and accurate representation of complex systems. The graph attention mechanism in RiTINI allows the model to adaptively focus on the most relevant interactions in time and space, while the graph neural ODEs enable continuous-time modeling of the system's dynamics. We evaluate RiTINI's performance on various simulated and real-world datasets, demonstrating its state-of-the-art capability in inferring interaction graphs compared to previous methods.

\end{abstract}

\section{Introduction}

Biophysical and biochemical systems are highly complex and dynamic entities whose behavior is governed by interactions between regulatory elements. For instance, cells contain numerous components such as genes, proteins, chromatin, small molecules and structural elements. The phenotypic identity and behavior of the cell is controlled largely by the proteins expressed in the cell, which are in turn determined by the gene regulatory network which modulates gene expression. Similarly, in the brain, neural activity is modulated by networks of neurons which stimulate and repress one another based on complex and dynamic connectivity patterns. In each of these cases, knowledge of the interaction graph and its dynamics could help simulate and analyze the behavior of the system.  Here, we focus on the problem of interaction graph inference in time series data, and propose a graph ordinary differential equation (graph-ODE) based method  equipped with dual attention mechanisms for tackling this problem.  We can generally perform such inference on either time-trace measurements which are available as neuronal activity readouts, or inferred time traces from single cell data using software such as TrajectoryNet \cite{pmlr-v119-tong20a}.

We define interaction graph inference as the problem of inferring a directed (though not necessarily acyclic) time-varying graph $G(t)=(\mathcal{V}, \mathcal{E}(t))$ on a fixed set of vertices $\mathcal{V} = \{1,2,\cdots, N\}$, with dynamic edge weights $W_{ij}(t) > 0$ if $(i,j) \in \mathcal{E}(t)$. Features $X(v,t)$ on vertex $v \in \mathcal{V}$ have dynamic expression based on the structure of $G(t)$, i.e. $\partial_t X(v,t)$ is a function of $X(v,t)$ and ${X(\mathcal{N}(v), t-\delta)}$, i.e., the features on neighbors of $v$ at a time $t-\delta$. In other words, the expression of feature $X$ on vertex $v$ is determined by the interaction between $v$ and its directed neighbors (or regulators) $\mathcal{N}(v)$. In addition, some systems (such as neurons) exhibit hysteresis or memory, in that the interaction is mediated not just by the value of the neighbors at one timepoint $t-\delta$, but rather the value over a time interval, $\delta \in [t, t-\tau]$. 
In the absence of prior knowledge, $G(t)$ would ideally be inferred as the graph that best explains the features $X(v,t)$, i.e., with the lowest error. However, this is an inverse problem where many graphs can plausibly explain these features. To break the symmetry, we make use of prior knowledge and regularize this graph  with a graph prior $\mathcal{P}(t) = (\mathcal{V}_{\mathcal{P}}, \mathcal{E}_{\mathcal{P}}(t))$ such that deviation from the prior graph is penalized according to the discrepancy between the adjacency matrices $\| W(t) - \mathcal{E}_{\mathcal{P}}(t) \|_{F}$.  In contrast to typical causal discovery frameworks, we allow the graphs to contain cycles.

Our method operates by using the prior graph $\mathcal{P}(0)$ as an initial condition, and integrating a learnable graph-ODE over time with an MLP aggregation function  $f_\theta$, i.e. $\partial_t X(v,t) = f_\theta(v, \mathcal{N}(v), \alpha, t)$. We are able to learn how  nodes interactions change strengths over time using a novel attention scheme that combines a vertex-neighborhood attention parameter $\alpha(t)$ and temporal attention parameters $l(t)$. The temporal attention parameters allow us to characterize the presence of hysteresis in the system, where the variance of the resulting distribution indicates hysteresis. The dynamic graph inference is made by utilizing the the readouts of the spatial attention parameter $\alpha(t)$ over time.  

We show that the quality of our predictions improve by training on temporally localized perturbations on vertex features, similar to what can be done in biological systems using optogenetic stimulation or gene expression perturbations. In such cases the value of a vertex feature can be manipulated in a localized time frame, and this effect then propagates through the network in a manner determined by network structure. We show that this error propagation helps infer the connectivity of the network. We showcase results on neuronal simulation data obtained using NEST, gene regulatory simulation data from SERGIO, and scRNA-seq data from human embryonic stem cell differentiation. 


Our \textbf{key contributions} are as follows: 

\begin{itemize}
    \item We define the problem of interaction graph inference and outline its relevance to complex biological systems. 
    \item We propose a new method, Regulatory Temporal Interaction Network Inference (RiTINI), to learn the underlying interaction graph from a multivariate time series dataset using a novel attention based graph ODE network. Our method operates in continuous time and can thus allows us to predict, and interpolate the trajectories continuously.
    \item We formulate new variants of graph attention over time and graph vector space for the purpose of learning interaction graphs. 
    \item We further show that this method can be trained with node perturbations and that prediction of their effects can improve graph structure learning. 
    \item We show that our method is competitive in a variety of real-world problems such as neurology (recovery of neuronal networks) and molecular biology (recovery of gene regulatory networks).
\end{itemize}



\section{Related Works}

\subsection{Inferring Functional Connectivity in the Brain}


Brain functional connectivity is a prominent and pivotal field of study that seeks to infer interaction graphs from time series data, thereby illuminating the underlying neural dynamics and communication patterns within the brain. 
Numerous approaches have been proposed to tackle this complex task, among which Granger causality stands out as one of the most widely employed. Granger causality has been extensively utilized to unveil causal relationships between different brain regions, leveraging their temporal dynamics \cite{granger_causality_1969, friston_granger_2013, zhang_dynamic_2019, sethi_directed_2017, ribeiro_granger_2021}. For instance, Zhang et al. conducted an analysis utilizing Granger causality to investigate the dynamic functional connectivity associated with eating behaviors. Their study underscored the advantages of capturing time-varying interactions between brain regions and elucidated network dynamics that correlated with distinct aspects of eating behavior \cite{zhang_dynamic_2019}. 

\subsection{Gene Regulatory Inference via Linear Methods}

Gene regulatory network (GRN) inference is crucial for understanding cellular functions, developmental processes, and disease mechanisms due to their importance in governing cell identity \cite{Davidson2006-ql}. Inferring these networks from gene expression data is a challenging task due to the high-dimensional, noisy nature of the data, as well as the complex, nonlinear relationships between genes \cite{marbach2012}. Advancements in technology, particularly in the domain of single-cell transcriptomic data acquisition, have paved the way for the development of several innovative methodologies for GRN inference \cite{Pratapa2020, Nguyen2020ComprehensiveSurvey}. 

Historically, correlation-based methods have been used to determine the co-expression of gene patterns across samples \cite{Carter2004-xi, Babur2010-fg, Wang2009-na}. These methods are straightforward and fast to compute, but can only capture linear or monotonic dependencies and cannot distinguish between direct and indirect interactions \cite{butte2000} Furthermore, gene co-expression cannot capture causal information about how genes relate to one another.

Another common approach to gene regulatory inference is linear regression modeling, through describing each gene's expression level as a weighted sum of the expression of its putative regulators. Notably, most of these approaches add additional regularization to limit overfitting and distinguish active connections from inactive or false connections. Lasso regression \cite{tibshirani1996}, ridge regression \cite{Hoerl1970-yj}, LARS \cite{Efron2004-ma}, and elastic net \cite{zou2005} have all been used for GRN inference, including within toolboxes Inferelator \cite{Bonneau2006-px, Miraldi2019-ba}, CellOracle \cite{Kamimoto2023-yc}, and TIGRESS\cite{haury2012}, as well as a computational model of single-cell perturbations \cite{Dixit2016-gy}. Although these methods can capture both positive and negative interactions and can infer direct interactions by introducing sparsity in the inferred networks, they assume linear relationships between genes and struggle with scale.

\subsubsection{Information-theoretic approaches}

Non-linear dependencies can be captured by information-theoretic methods, such as maximum entropy \cite{Lezon2006-uz, Locasale2009-kx}, mutual information \cite{meyer2007,de2021latent}, and conditional mutual information \cite{Krishnaswamy2014-vj}, where one measures the mutual dependence between two genes' expression profiles. Partial information decomposition has also been used to consider triplets of genes within the networks \cite{Chan2017-ac}. Notably, such methods can infer direct interactions by conditioning on the expression of other genes \cite{meyer2007}; however, they can be computationally expensive and require a large number of samples to accurately estimate mutual information.



\subsection{Dynamics Modeling}

Efforts in dynamics modeling have also been influential in interaction graph inference. Prasse et al. presented a method that leverages a system of ordinary differential equations (ODEs) to predict the dynamics of complex systems, given an interaction graph \cite{prasse_predicting_2022}. These are designed coupled ordinary differential equation systems that can be used for simulation and data generation. Recently such frameworks have been applied in math biology to describing transitions in single-cell RNA-sequencing data via a concept known as {\em RNA velocity} \cite{bergen2020generalizing, chen2022deepvelo}.
A key insight in RiTINI is to co-opt the ODE based dynamics modeling framework for learning rather than just following a trajectory, and in the process also inferring the regulatory connections.

\section{Background}

\subsection{Regulatory networks and time series}

In biological systems, there exist complex regulatory interactions that are often characterized by proteins, such as transcription factors (TFs), that can stimulate a target gene (possibly another transcription factor) by binding to a nearby region of DNA, often referred to as a promoter \cite{bansal2007}. Ergo, this interaction between a transcription factor and target is physically achieved by binding and proximity. These interactions can be dynamic and influenced by numerous factors such as the availability of the regulatory protein, its translation and degradation rate, and its binding affinity to the target \cite{gardner2003}. Furthermore, epigenetic mechanisms, such as chromatin accessibility in the target region, can mediate these interactions. Other regulatory interactions may include enhancer-target interactions, where DNA loops facilitate the connection of two DNA regions, enabling other proteins that stimulate gene expression to bind. Gene regulatory effects can also be indirect via different links in the chain. In this context, these physical regulatory processes are abstracted by considering a time-varying graph inferred via a gene expression prediction task, regularized by a prior.

Neurons connect to each other via dynamic synaptic connections that can propagate activation signals from one neuron to another. If a source neuron fires, then depending on neurotransmitters and other molecules mediating connectivity, another neuron can also fire if the action potential is greater than the target. These synaptic connections are widely recognized to be both plastic and capable of saturation over time. When presented with such data, we can again abstract these physical processes by modeling a time-varying graph \cite{geweke1984}. These can also be abstracted as time-varying, potentially cyclic graphs whose connections give rise to signal patterns. However, in these cases, the prior network is not well-mapped out, therefore we use time-lagged Granger causality to define a starting network \cite{geweke1984}.


\subsection{Graph ODEs}

Neural ODEs represent an emerging paradigm that computes a derivative, instead of a function, parameterized by a neural network \cite{chen2018}. These derivatives are then integrated using an ODE solver to later timepoints, where they can be penalized by discretely collected time series data. Thus neural ODEs are useful for learning the dynamics of systems from data.  Graph ODEs are a more recent extension of neural ODEs, which use a graph neural network for the computation of the derivative \cite{poli2020}. 

Graphs naturally provide greater interpretability as they can encode the coupling structure of a dynamic system, which in this context, is the regulatory graph being inferred. Additionally, graphs, like transformers, can be endowed with attention mechanisms, which allow for the dynamic change in strength between two vertices \cite{velickovic2017}. This not only enhances the model's flexibility in handling complex systems but also augments its interpretability and ability to capture complex dependencies, which is a crucial aspect when dealing with high-dimensional data or systems with intricate interactions \cite{rackauckas2020}. Further facilitating graph ODEs ability to learn dynamics are their ability to handle continuous-time data and incorporate prior knowledge about the system, such as known interactions or network structures \cite{rackauckas2020, poli2020}. This is particularly useful in biological systems where prior knowledge from existing databases or literature can be used to guide the network inference process \cite{bergen2020generalizing, chen2022deepvelo}. 

\section{Methods}
First we describe the setup of our problem including formally defining an interaction graph. Next we present an architecture suited to this particular problem. 

\subsection{Problem Setup}

Our main goal in this work is to infer an interaction graph via the proxy task of matching time-trajectories of node features of the graph. As discussed previously, we take inspiration from biological networks. Here, we define this problem mathematically to highlight facets of such networks.  

\begin{definition}
We define an interaction graph as a directed time-varying graph $G(t)=(\mathcal{V}, \mathcal{E}(t))$ on a fixed set of vertices $\mathcal{V} = \{1,2,\cdots, N\}$, with dynamic edge weights parameterized by time $W_{ij}(t) > 0$ if $(i,j) \in \mathcal{E}(t)$. 
\end{definition}

We assume that features $X(v_i, t)$ on a vertex $v_i$ at time $t$ are a function $r$ of the features of its neighboring vertices $v_j \in \mathcal{N}(i)$ at recent time interval $[t - \tau, t]$ and itself. In cases of systems with no hysteresis this can be a single timepoint, $t-\delta \in [t - \tau, t]$, representing the lag of information flow between the two vertices. The function $r$ is assumed to be time-dependent, meaning that it can change over time. However, we implicitly assume it changes at longer time scales than $\tau$, i.e., that it is metastable over short time ranges.
\begin{align}
    X(v_i, t) = r \left( \{ X(v_j,t-\delta)~ |~ v_j \in \mathcal{N}(i)~ \cup~ \{i\} \} \right)
\end{align}

%


We also assume that there is prior knowledge available to regularize this graph, i.e deviation from the prior graph, $\mathcal{P}$, can be penalized by $ \mathcal{L}_{\mathcal{P}}(t) = \| W(t) - \mathcal{E}_{\mathcal{P}} \|_{F}$.

Further we assume that the graph we want to infer is sparse. To encourage sparsity, this could be explicitly modelled by penalizing the Frobenius norm, $\mathcal{L}_{\text{sparse}} = | W(t) |_1$.

Collectively, objective function, which we will translate to our neural network loss function is given below:
\begin{align}
\mathcal{L} = \mathcal{L}_{\text{MSE}} + \lambda_1 \mathcal{L}_{\mathcal{P}} + \lambda_2 \mathcal{L}_{\text{sparse}},
\label{eq:loss}
\end{align}
where, $\mathcal{L}_\text{MSE}$ is the mean squared error of the vertex features, $\lambda_1$ is a hyper-parameter requiring the inferred graph to be close to the prior and $\lambda_2$ is the weight of the sparsity regularization.







\subsection{RiTINI architecture}

Here we describe the architecture of RiTINI and relate architectural choices to facets of the problem setup. 
RiTINI is generally a graph ODE network, i.e., it computes derivatives of vertex features of a graph. As such, it consists of a single Graph Attention (GAT) layer that produces derivatives of vertex features, which are then integrated out to match timepoints. RiTINI starts with a prior graph, $\mathcal{P}$, as the initial graph, and then uses space attention to modify graph aggregation operations. RiTINI features two sets of attention parameters, one that controls edge strength between vertices which represent interactors, and another that controls the time lag or hysteresis within interactions of the system. 

Given high dimensional traces of data, each vertex of the graph $G(t)$, is an individual feature. For each node $v_i \in \mathcal{V}$ at time $t_0$, the space and time attention layer computes an aggregated representation $g'_i(t)$:
\begin{align}
\label{eqn:deriv}
    g_i'(t) &= \sum_{\delta \in [0, \tau]} l_\delta \cdot \Big( \alpha_{ii}(t)WX(v_i, t-\delta) + \sum_{j\in \mathcal{N}(i)} \alpha_{ij}(t) W X(v_j, t-\delta) \Big),
\end{align}
where $l_\delta$ is the attention over the time trace, that is able to enforce hysteresis, $\delta$ is the time-lag, $\alpha_{ij}(t)$ is the "space" attention coefficient between vertex $i$ and vertex $j$ at time $t$, $\mathcal{N}(i)$ is the set of neighbors of vertex $i$, and $W \in \mathbb{R}^{d \times D}$ is a learnable weight matrix that projects the $D$-dimensional input features to a lower dimension $d$. The attention coefficients are computed using a learnable attention mechanism:
%
%
\begin{align}
\label{eqn:attn}
\alpha_{ij}(t) = \text{Att}_\phi \Bigg( \bigcup_{\delta \in [0,\tau]} X(v_i,t-\delta) \circ X(v_j,t-\delta) \Bigg)
\end{align}
where $\circ$ denotes concatenation, and $\text{Att}$ is a feed-forward attention network parameterized by $\phi$ that computed the attention paid by vertex $i$ towards vertex $j$ over the time window $[t-\tau, t]$.

Next, we model the dynamics of the graph as a continuous-time process by leveraging Neural ODEs. The Neural ODE is a function $f: \mathbb{R}^d \times \mathbb{R} \rightarrow \mathbb{R}^d$ parameterized by a neural network. The ODE is defined as:
\begin{align}
\frac{d}{dt}g_i(t) = f_\theta(g_i'(t), t),
\label{eq:ODE}
\end{align}
where $\theta$ are the learnable parameters of the neural network, and $h_i(t)$ is the hidden representation of vertex $v_i$ at time $t$. We integrate this ODE from an initial time $t_0$ to a target time $t_1$ (the next time sample available in the training data) using an ODE solver:
\begin{align}
\label{eqn:odesolve}
\hat{X}(v_i, t) &= \text{ODESolve}(f_\theta, g'_i(t_0), t_0, t_1) \quad \forall t \in (t_0, t_1]
\end{align}
To train the network, we obtain observed or generated time traces from the dynamic system at hand. For each feature in the data (corresponding to a vertex in our graph) we apply a MSE loss to ensure prediction of the time trajectory. Additionally, we apply regularizations to encourage sparsity in the inferred edges of the graph via the space attention, $\alpha_{ij}$, by assuming that the inferred graph is not too distant from the prior, $\mathcal{P} = (\mathcal{V}_\mathcal{P}, \mathcal{E}_\mathcal{P})$. Optionally, we can also apply a $L_1$ regularization to the attention coefficients:
\begin{align}
\label{eqn:ritiniloss}
\mathcal{L}(t) &= \sum_i \| \hat{X}(v_i,t) - X(v_i,t) \| + \lambda_1 \sum_{i,j}  \| \alpha_{ij}(t) - \mathcal{E}_\mathcal{P} \|_F + \lambda_2 |\alpha(t)|_1
\end{align}
Note that RiTINI can give several useful outputs: 
\begin{itemize}
\item A dynamic graph, i.e. one graph per time point based on the attention  $\alpha_{ij}(t)$ between each pair at time $t$.
\item A static graph, $G = (\mathcal{V},\mathcal{E})$, that is the average of $\alpha_{ij}(t)$ over time 
\item Predicted time trajectories that can be interpolated to with-held timepoints that closely match the timepoints of the system. 
\end{itemize}

\begin{figure}[h]
    \centering
    \includegraphics[width=0.9\linewidth]{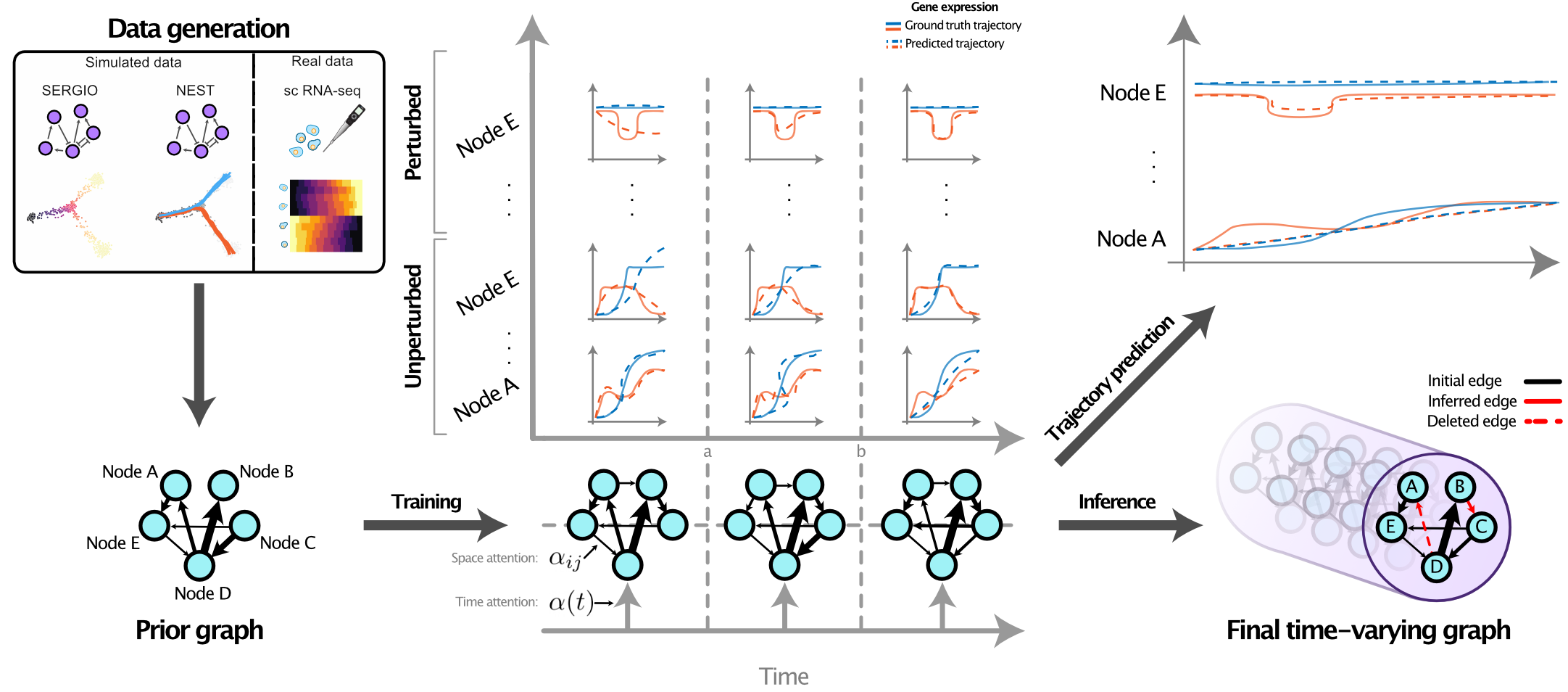}
    \caption{The RiTINI architecture takes in either simulated (e.g. from SERGIO or NEST) or real world (e.g. single cell RNA-seq) time series data and infers a prior graph if needed. Then the graph ODE is then trained as outlined in section \ref{sec:training} utilizing both space-and-time graph attentions. Thereafter, through inference, RiTINI produces the final time-varying graph, which can produce trajectory predictions for the dynamics of each node.}
    \label{fig:RiTINI_schematic}
\end{figure}
 
\subsection{Training with perturbation data}\label{sec:training}

In addition to training with time-trace data, RiTINI can be trained with perturbation data which is available in biological systems as optogenetic ablations in neuronal systems, or gene knockdowns or knockouts (CRISPR perturbations) in gene regulatory systems. Such data enhances learning of the network structure, as we show in the appendix.

\begin{definition}
We define a perturbation to vertex $v_i$ at  a time $t$, denoted $\Tilde{X}(v_i,t)$, as the trajectory obtained by add a small amount of noise, $\epsilon$, to the vertex signal at $v_i$ at time $t_p$, i.e., $\Tilde{X}(v_i,t_p) = X(v_i,t_p) + \epsilon$.
\end{definition}

We incorporate perturbations in the training of RiTINI to enhance its understanding of the network:
\begin{align}
\mathcal{L}_{\text{perturb}}(t) &= \sum_i  \| \hat{\Tilde{X}}(v_i,t) - \Tilde{X}(v_i,t) \|,
\end{align}
\begin{lemma}
Training the network to recover the perturbed dynamics, $\Tilde{X}(v_i,t)$, of vertex $v_i$ results in strengthening or weakening of attention on $v_i$ due to propagation of the signal to neighboring vertices $v_j$, i.e. $\{\alpha_{ij} \, | \, j \in \mathcal{N}(i) \}$.
\end{lemma}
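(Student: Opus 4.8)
The plan is to read the conclusion as a statement about gradient flow: ``strengthening or weakening of the attention $\{\alpha_{ij}\mid j\in\mathcal{N}(i)\}$'' should mean that the gradient of $\mathcal{L}_{\text{perturb}}$ with respect to the attention coefficients incident to the perturbed vertex $v_i$ is generically nonzero, so that every gradient-descent step moves them (upward or downward according to the sign of that gradient). To expose the signal that the perturbation contributes, I would first track the discrepancy $\Delta X(v_k,t):=\Tilde{X}(v_k,t)-X(v_k,t)$, which by the Definition satisfies $\Delta X(v_i,t_p)=\epsilon$ and $\Delta X(v_k,t_p)=0$ for $k\neq i$. Linearising the graph-ODE \eqref{eq:ODE} about the unperturbed trajectory, $\Delta X$ evolves under the network Jacobian whose block coupling $v_i$ to an adjacent vertex is precisely $\sum_{\delta} l_\delta\,\alpha_{ij}(t)\,W$, read off from \eqref{eqn:deriv}. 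This is the exact sense in which the perturbation ``propagates'' along the edges incident to $v_i$ and in which the propagated signal is a function of the $\alpha_{ij}$.

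Next I would differentiate the loss by the chain rule through the attention layer \eqref{eqn:attn} and the solver \eqref{eqn:odesolve}. Writing $r_i:=\big(\hat{\Tilde{X}}(v_i,t_1)-\Tilde{X}(v_i,t_1)\big)/\|\hat{\Tilde{X}}(v_i,t_1)-\Tilde{X}(v_i,t_1)\|$ for the residual sub-gradient at a readout time $t_1$,
\begin{align}
\frac{\partial \mathcal{L}_{\text{perturb}}}{\partial \alpha_{ij}}
= \Big\langle r_i,\; S_i\,\frac{\partial g_i'(t_0)}{\partial \alpha_{ij}(t_0)}\Big\rangle,
\qquad
\frac{\partial g_i'(t_0)}{\partial \alpha_{ij}(t_0)} = \sum_{\delta\in[0,\tau]} l_\delta\, W\, \Tilde{X}(v_j,t_0-\delta),
\end{align}
where $S_i:=\partial\hat{\Tilde{X}}(v_i,t_1)/\partial g_i'(t_0)$ is the state-transition (adjoint) sensitivity of the ODESolve. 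The key step is then to isolate the perturbation contribution by subtracting the analogous unperturbed gradient and expanding to first order in $\epsilon$: both the residual $r_i$ and the feature factor $\Tilde{X}(v_j,t_0-\delta)=X(v_j,t_0-\delta)+\Delta X(v_j,t_0-\delta)$ carry terms proportional to $\epsilon$ once the perturbation has reached $v_j$ through the coupling identified above, so the ``extra'' gradient on $\alpha_{ij}$ is, to leading order, $\epsilon$ times the product of the adjoint sensitivity and the lagged neighbor features summed against $l_\delta$. Through \eqref{eqn:attn} the same perturbed stream $X(v_i,\cdot)$ also feeds $\phi$ on exactly the pairs incident to $v_i$, so the induced update is supported on $\{\alpha_{ij}\mid j\in\mathcal{N}(i)\}$.

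Finally I would establish non-degeneracy: this leading term vanishes only in the non-generic events that the residual is already zero (the perturbed trajectory is fit), the time attention $l_\delta$ annihilates the lagged neighbor features, or $v_i$ is decoupled from $v_j$ so that $S_i$ kills the corresponding direction. Away from these, the gradient component is nonzero, and a descent step on $\mathcal{L}_{\text{perturb}}$ strictly raises or lowers $\alpha_{ij}$ with sign set by whether a stronger edge reduces the propagated residual; this is exactly the claimed strengthening/weakening. I expect the main obstacle to be making $S_i$ rigorous: since $\hat{\Tilde{X}}(v_i,t_1)$ depends on $g_i'(t_0)$ only implicitly through the ODE, I must invoke the continuous adjoint/sensitivity equations \cite{chen2018} to define it as a bona fide state-transition operator and to justify the first-order expansion in $\epsilon$. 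A secondary subtlety is bookkeeping the coupling direction---$\alpha_{ij}$ enters $g_i'$ while the perturbation of $v_i$ also drives the vertices for which $v_i$ is a regulator---which I would resolve by noting the argument is symmetric in the two endpoints up to the dependence of $\text{Att}_\phi$ on both feature streams.
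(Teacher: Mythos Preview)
Your proposal shares the paper's core idea---linearise the perturbed dynamics about the unperturbed trajectory and observe that the coupling between vertices is carried by the attention coefficients---but your argument is considerably more developed than the proof actually given in the paper. The paper's entire proof is two displayed equations: it writes $\hat{\Tilde{X}}(v_i,t)=\int_{t_0}^t f_\theta(\Tilde{g}_i'(s),s)\,ds$, Taylor-expands $f_\theta$ to first order to obtain a Jacobian term $J_f(g'(s))(\Tilde{g}_i'(s)-g_i'(s))$, and then asserts that $(J_f(g'(s)))_{ij}=\partial f_i/\partial g_j\propto\alpha_{ij}$. That is the whole argument; there is no explicit gradient of $\mathcal{L}_{\text{perturb}}$ with respect to $\alpha_{ij}$, no adjoint or state-transition operator, no non-degeneracy discussion, and no bookkeeping of which edges receive the update.

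What you add---the chain-rule computation of $\partial\mathcal{L}_{\text{perturb}}/\partial\alpha_{ij}$ through the solver, the identification of the sensitivity $S_i$ via the continuous adjoint, and the generic nonvanishing argument---is precisely what would be needed to turn the paper's sketch into an actual claim about how training \emph{moves} the attention weights. The paper's version, by contrast, only shows that the predicted perturbed trajectory depends on $\alpha_{ij}$ through the linearised Jacobian, leaving the step from ``depends on'' to ``gradient descent will update'' implicit. So your route is not really a different decomposition so much as a completion: same starting lemma (first-order propagation along edges weighted by $\alpha_{ij}$), but you carry it through to the training signal, which is what the statement is ostensibly about. The anticipated obstacle you flag---making $S_i$ rigorous via the adjoint equations---is real, and the paper simply does not engage with it.
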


\begin{proof}

Up to first order approximation, the predicted trajectory of vertex $v_i$ is given by:
\begin{align*}
\hat{\Tilde{X}}(v_i,t) = \int_{t_0}^t f_\theta(\Tilde{g}'_i(s), s) ds \approx \int_{t_0}^t \Big( f_\theta(g'_i(s),s) + J_f(g'(s))(\Tilde{g}'_i(s) - g'_i(s)) \Big) ds
\end{align*}
where the derivatives $\Tilde{g}_i$ and $g_i$ are estimated using Eq. \ref{eqn:deriv}. The Jacobian is given by:
\begin{align*}
\label{eqn:jacobian}
(J_f(g'(s)))_{ij} = \frac{\partial f_i(g_i(s))}{\partial g_j(s)} \propto \alpha_{ij}
\end{align*}
\end{proof}



\subsection{Static interaction graph inference using MAP estimation}

Since RiTINI outputs can be averaged over time to provide a static graph, we can show that based on our loss function construction, that in specific cases RiTINI performs maximum apriori estimation of the graph. 

Under the Bayesian paradigm, the static graph inference problem involves estimating a graph $\mathcal{G}^*$ as follows $\mathcal{G}^* = \argmax_{\mathcal{G}} \mathbb{P}(\mathcal{G}) \cdot \mathbb{P}(\mathcal{D} \mid \mathcal{G})$. $\mathbb{P}(G)$ represents a prior distribution on the space of regulatory graphs, $\mathcal{P} = (\mathcal{V}_\mathcal{P} , \mathcal{E}_\mathcal{P}) \sim \mathbb{P}(G)$ and $\mathbb{P}(\mathcal{D} \mid \mathcal{G})$ is the likelihood of the data. We show that our learning objective corresponds to a MAP objective with a specific graph prior.

\begin{prop}
The learning objective of Eq. \eqref{eq:loss} corresponds to a maximum a-posteriori objective where the data is assumed to be generated from a Gaussian process with white kernel and mean process following the system of ordinary differential equations of Eq. \eqref{eq:ODE}. The prior distribution is a Boltzman distribution centered at $\mathcal{P}$  where the distance between two graphs $\mathcal{G}$ and $\mathcal{G}'$ with adjacency matrices $A$ and $A'$ is given by $d(\mathcal{G},\mathcal{G}') = \alpha \lVert \mathcal{E}_\mathcal{G} - \mathcal{E}_{\mathcal{G}'}  \rVert_F + (1-\alpha) \lVert \mathcal{E}_\mathcal{G}  -\mathcal{E}_{\mathcal{G}'}  \rVert_1$, with $\alpha \in [0,1]$.
\end{prop}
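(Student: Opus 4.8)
The plan is to recognize the MAP estimator as the minimizer of the negative log-posterior and then match its two pieces---the negative log-likelihood and the negative log-prior---against the three terms of Eq.~\eqref{eq:loss}. Writing $\mathcal{G}^* = \argmax_{\mathcal{G}} \mathbb{P}(\mathcal{G})\,\mathbb{P}(\mathcal{D}\mid\mathcal{G}) = \argmin_{\mathcal{G}} \big[ -\log \mathbb{P}(\mathcal{D}\mid\mathcal{G}) - \log \mathbb{P}(\mathcal{G}) \big]$, the problem reduces to computing each log-density in closed form and reading off the coefficients. Throughout I would identify the inferred adjacency $\mathcal{E}_\mathcal{G}$ with the (time-averaged) attention matrix $W$, so that the MAP is over the same object that Eq.~\eqref{eq:loss} regularizes.

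For the likelihood, I would use the assumption that the data is a Gaussian process with white kernel whose mean trajectory is the ODE solution $\hat{X}$ of Eq.~\eqref{eq:ODE}. A white kernel $k(s,s') = \sigma^2\delta_{s,s'}$ makes the observations at the finitely many sampled times mutually independent Gaussians centered at $\hat{X}(v_i,t)$ with variance $\sigma^2$. Hence $-\log \mathbb{P}(\mathcal{D}\mid\mathcal{G}) = \tfrac{1}{2\sigma^2}\sum_{i,t}\lVert X(v_i,t)-\hat{X}(v_i,t)\rVert^2 + C$, which is exactly $\tfrac{1}{2\sigma^2}\mathcal{L}_{\text{MSE}}$ up to an additive constant; the dependence on $\mathcal{G}$ enters only through $\hat{X}$, since the graph (via the attention weights $\alpha_{ij}$) drives the derivative in Eqs.~\eqref{eqn:deriv}--\eqref{eq:ODE}.

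For the prior, a Boltzmann distribution centered at $\mathcal{P}$ is $\mathbb{P}(\mathcal{G}) = Z^{-1}\exp(-\beta\, d(\mathcal{G},\mathcal{P}))$, so $-\log\mathbb{P}(\mathcal{G}) = \beta\alpha\lVert \mathcal{E}_\mathcal{G}-\mathcal{E}_\mathcal{P}\rVert_F + \beta(1-\alpha)\lVert \mathcal{E}_\mathcal{G}-\mathcal{E}_\mathcal{P}\rVert_1 + \log Z$. Multiplying the whole negative log-posterior by the scale $2\sigma^2$ (which leaves the argmin unchanged) and summing over observation times, the Frobenius piece becomes $\lambda_1 \mathcal{L}_\mathcal{P}$ and the $L_1$ piece becomes $\lambda_2 \mathcal{L}_{\text{sparse}}$, with the explicit correspondence $\lambda_1 = 2\sigma^2\beta\alpha$ and $\lambda_2 = 2\sigma^2\beta(1-\alpha)$. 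This is invertible: given $\lambda_1,\lambda_2>0$ one sets $\alpha = \lambda_1/(\lambda_1+\lambda_2)\in[0,1]$ and $2\sigma^2\beta = \lambda_1+\lambda_2$, which establishes the claimed equivalence.

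The main obstacle is reconciling the centering of the $L_1$ term: the Boltzmann distance measures $\lVert \mathcal{E}_\mathcal{G}-\mathcal{E}_\mathcal{P}\rVert_1$ from the prior $\mathcal{P}$, whereas $\mathcal{L}_{\text{sparse}} = \lvert W\rvert_1$ penalizes distance from the empty graph. I would resolve this either by taking the sparsity component of the prior to be centered at $\mathbf{0}$ (i.e. treating $\mathbb{P}(\mathcal{G})$ as a product of two Gibbs factors, a Gaussian-type factor centered at $\mathcal{P}$ and a Laplace-type sparsity factor centered at the empty graph), or by assuming $\mathcal{P}$ is itself sparse so that $\lVert W - \mathcal{E}_\mathcal{P}\rVert_1$ and $\lvert W\rvert_1$ differ by a $\mathcal{G}$-independent constant to leading order. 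A secondary technical point is checking that $Z$ is finite: treating the adjacency entries as continuous variables in $\mathbb{R}^{N\times N}$, the combined Frobenius-plus-$L_1$ energy grows without bound, so $\exp(-\beta d)$ is integrable and the prior is well defined; the constant $\log Z$ is independent of $\mathcal{G}$ and drops out of the argmin, as does $C$ from the likelihood.
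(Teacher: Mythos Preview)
Your approach is essentially the same as the paper's: split the negative log-posterior into the Gaussian log-likelihood (yielding the MSE term) and the Boltzmann log-prior (yielding the Frobenius and $L_1$ regularizers), then identify coefficients. You are in fact more careful than the paper, which gives only a two-line sketch and does not address the $L_1$-centering discrepancy or the finiteness of $Z$ that you flag.
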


The process mean $\mu_i(t)$ dynamics follow the same as Eq. \eqref{eq:ODE}:
\begin{align*}
    \frac{d\mu_{i}}{dt}(t) = f_{\theta}(\mu_{i}(t),t,\alpha)
\end{align*}
and the stochasticity is assumed to arise for an independent additive Gaussian noise $\epsilon \sim \mathcal{N}(0,\sigma)$.
Under these conditions, the log likelihood is proportional to 
\begin{align*}
    \log \mathbb{P}(\mathcal{D} \mid \mathcal{G}) \propto - \sum_i \lVert \mu_i (t) - X(v_i,t) \rVert_2^2,
\end{align*}
where we made the dependence on the interaction graph explicit. The graph prior is proportional to 
\begin{align*}
    \log \mathbb{P}(\mathcal{G}) \propto \alpha \lVert \mathcal{E}_\mathcal{G} - \mathcal{E}_\mathcal{P} \rVert_F +  (1-\alpha) \lVert \mathcal{E}_\mathcal{G} - \mathcal{E} \rVert_1,
\end{align*}

\section{Experiments}

In this section we showcase the performance of RiTINI in inferring interaction graphs by comparisons to other methods. We note that most other methods do not perform dynamic graph inference. For this purpose we used simulated datasets from NEST (a neuronal simulator) and SERGIO (a gene regulatory simulator) as well as on canonical dynamic systems. 


We performed simulations of a dynamical system, neuronal networks and gene regulatory networks to generate time-lapse data for training and validating the RiTINI architecture. All simulation and model training were performed on a Ubuntu 20.04.4 LTS machine with 8 NVIDIA Tesla K80 GPU accelerators. Source code is available at \href{https://anonymous.4open.science/r/RiTINI-7B44/}{https://anonymous.4open.science/r/RiTINI-7B44/}

First, we showcase the ability of RiTINI to follow time traces of a system, including the ability to interpolate to with-held timepoints in Appendix Section \ref{app: time-trace}. Next, we compare the performance of RiTINI in the graph inference task by comparing with other well known methods for static graph inference, including Granger Causality (GC) \cite{granger_investigating_1969}, Optimal Causation Entropy (OCE) \cite{sun_causal_2015}, a robust implementation of the PC-algorithm for DAGs \cite{Spirtes2000,kalisch_robustification_2008}, multivariate transfer entropy (mTE) \cite{schreiber_measuring_2000}, and multivariate mutual information (mMI) implemented in the IDTxl software library \cite{wollstadt_idtxl_2019} on dynamical systems, simulations of neuronal networks as well as gene regulatory inference as explained below. Explanations of these methods are given in Appendix Section \ref{app: other-methods}. 

\begin{table}[h]
    \setlength{\tabcolsep}{1.25pt}
    \centering
    \tiny
    \begin{tabular}{c c c c c c c c}
    \toprule
        \multicolumn{2}{c}{\textbf{Dataset}} & \multicolumn{6}{c}{\textbf{Method}} \\
        \cmidrule(lr){1-2}\cmidrule(lr){3-8}
         Simulation Type & Ground Truth & GC & OCE & PC & mTE & mMI & RiTINI (ours) \\
    \midrule
     \multirow{1}{*}{Dynamical System} & $|\mathcal{V}|=5,|\mathcal{E}|=5$ & $2.6 \pm 1.0$ & $4.4 \pm 1.6$ & $4.4 \pm 1.4$ & $4.6 \pm 1.7$ & $\mathbf{1.8 \pm 0.7}$ &  $2.2 \pm 1.6$\\
     \multirow{3}{*}{\shortstack{Neuronal Network (NEST)\\(exponential integrate-and-fire)}} & $|\mathcal{V}|=40,|\mathcal{E}|=78$ & $28.8 \pm 3.3$ & $72.8 \pm 3.5$ & $75.8 \pm 2.1$ & $64.2 \pm 3.5$ &  $\mathbf{20.6 \pm 5.0}$ & $25.4 \pm 2.8$\\
     & $|\mathcal{V}|=50,|\mathcal{E}|=126$ & $39.2 \pm 2.3$ & $109.6 \pm 4.2$ & $117.8 \pm 3.9$ & $100.0 \pm 7.6$ & $\mathbf{34.6 \pm 5.2}$ & $35.2 \pm 2.3$\\
     & $|\mathcal{V}|=75,|\mathcal{E}|=308$ & $75.4 \pm 9.2$ & $255.6 \pm 7.3$ & $284.6 \pm 3.2$ & $232.2 \pm 7.1$ & $82.0 \pm 3.9$ & $\mathbf{69.6 \pm 7.7}$\\
    \multirow{3}{*}{\shortstack{Gene Regulatory Network (SERGIO)\\(Michaelis–Menten)}} & $|\mathcal{V}|=100,|\mathcal{E}|=137$ & $51.2 \pm 3.3$ & $138.6 \pm 3.5$ & $140.4 \pm 3.9$ & $126.4 \pm 2.4$ & $51.2 \pm 3.3$ & $\mathbf{44.6 \pm 6.2}$\\
     & $|\mathcal{V}|=150,|\mathcal{E}|=329$ & $109.0 \pm 6.4$ & $293.4 \pm 2.9$ & $317.2 \pm 3.7$ & $261.0 \pm 2.2$ & $99.8 \pm 4.0$ & $\mathbf{83.6 \pm 4.2}$\\
     & $|\mathcal{V}|=200,|\mathcal{E}|=507$ & $158.8 \pm 12.6$ & $449.8 \pm 1.1$ & $495.6 \pm 6.5$ & $397.4 \pm 8.8$ & $162.8 \pm 6.2$ & $\mathbf{128.0 \pm 4.3}$\\
    \bottomrule
    \end{tabular}
    \caption{Mean and standard deviation of the graph edit distance between the inferred graph and the ground truth, across 5 different simulations with perturbations (lower is better).}
    \label{tab:comparison}
\end{table}

\begin{figure}[h]
    \centering
    \includegraphics[width=0.9\linewidth]{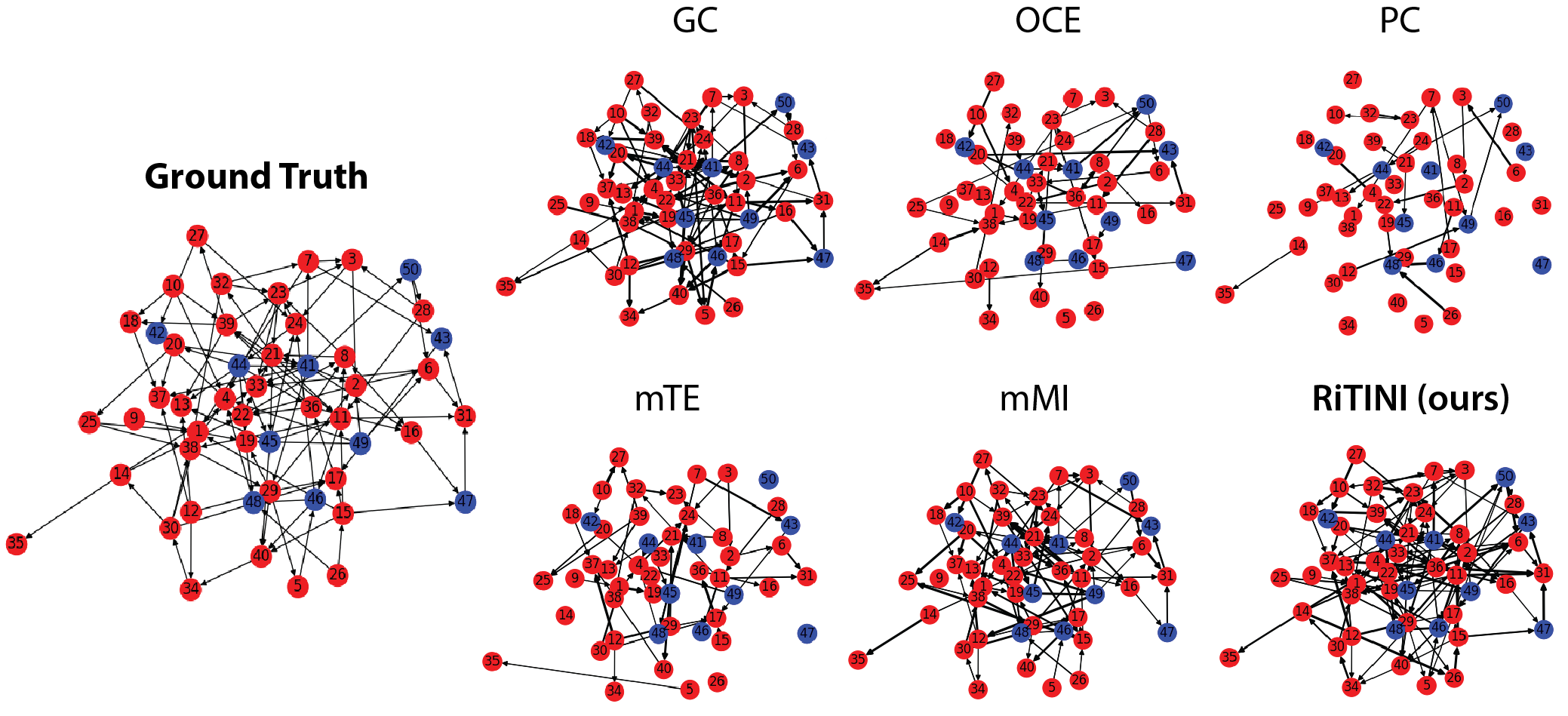}
    \caption{A neuronal network consisting of $50$ neurons (80\% excitatory labelled in red and 20\% inhibitory labelled in blue) was simulated using NEST \cite{sinha_ankur_2023_6867800} and the graphs inferred using the neuron firing rate as input features were compared with ground truth. Self-loops are not shown for ease of visualization.}
    \label{fig:NEST_inference}
\end{figure}

\subsection{Dynamical system}

We simulated a 5 node network, with the same structure as examples in \cite{baccala_partial_2001,prasse_predicting_2022}, by simulating the following non-linear system of ODEs:
\begin{align*}
    x_1(t) &= x_1(t) + 0.95 \sqrt{2} x_1(t-1) - 0.9025 x_1(t-2)\\
    x_2(t) &= x_2(t) + 0.5x_1^2(t-2) \\
    x_3(t) &= x_3(t) - 0.4x_1(t-3) \\
    x_4(t) &= x_4(t) - 0.5x_1^2(t-2) + 0.5\sqrt{2}x_4(t-1) + 0.25\sqrt{2}x_5(t-1) \\
    x_5(t) &= x_5(t) - 0.5\sqrt{2}x_4(t-1) + 0.5\sqrt{2}x_5(t-1)
\end{align*}

Additionally, we simulated this system with the addition of Gaussian white noise (with variance $\sigma$ ranging from $0.05$ to $0.25$) to obtain additional perturbed trajectories. The ground truth network for this system can be described by the set of vertices, $\mathcal{V} = \{x_1, x_2, \cdots, x_5 \}$, and edges, $\mathcal{E} = \{ x_1 \to x_2, x_1 \to x_3, x_1 \to x_4, x_4 \to x_5, x_5 \to x_4 \}$. We used RiTINI to predict the trajectory of this system at held-out timepoints and infer a static graph by time-averaging the learned attention coefficients (Appendix Section \ref{app: dyn}, Figure \ref{fig:fivenodeode}). On this small dynamical system, RiTINI was among the top 2 performers on the static graph inference task (Table \ref{tab:comparison}), alongside multivariate mutual information (mMI). In Appendix Section \ref{app: dyn}, we show that RiTINI outperforms all methods on larger dynamical systems with more than $10$ variables.

\subsection{Neuronal network}

NEST (Neural Simulation Tool), developed by Sinha et al. \cite{sinha_ankur_2023_6867800}, is a platform that offers a flexible and efficient approach for modeling and studying the behavior of various neuronal networks, particularly in the context of large-scale simulations. Using NEST, we generated random networks consisting of 40, 50 and 75 neurons using the leaky integrate-and-fire model with alpha-function kernel synaptic currents. We incorporated 80\% excitatory neurons and 20\% inhibitory neurons in our network. We used a multimeter device in the NEST simulator to record the membrane potential and firing rate of all neurons during the $1000$ ms simulations with a resolution of $0.1$ ms. Additionally, we systematically perturbed the parameters of the integrate-and-fire model, including the voltage threshold ($V_\text{th}$), resting membrane potential ($E_L$), capacity of the membrane ($C_m$) and the refractory period ($t_\text{ref}$) to obtain perturbed time-traces from the system. Further details of the model parameters, simulation parameters and perturbations are provided in Appendix Section \ref{app: nest}.

The node features obtained from the neuronal network simulation, including the perturbations, were used to train the RiTINI architecture. The time-averaged static graph inferred by RiTINI was compared against graphs obtained other methods by computing the graph edit distance to the ground truth (Table \ref{tab:comparison} and Figure \ref{fig:NEST_inference}).
RiTINI was among the top 2 performers on this task, along with multivariate mutual information (mMI). However, note the RiTINI is the only architecture capable of inferring a dynamic graph (Figure \ref{fig:timelapseattn} by leveraging learned space and time attention coefficients. The predicted time trajectories and interpolation at held-out time points obtained using RiTINI for these systems are provided in Appendix Section \ref{app: time-trace}.

\begin{figure}[h]
    \centering
    \includegraphics[width=0.9\linewidth]{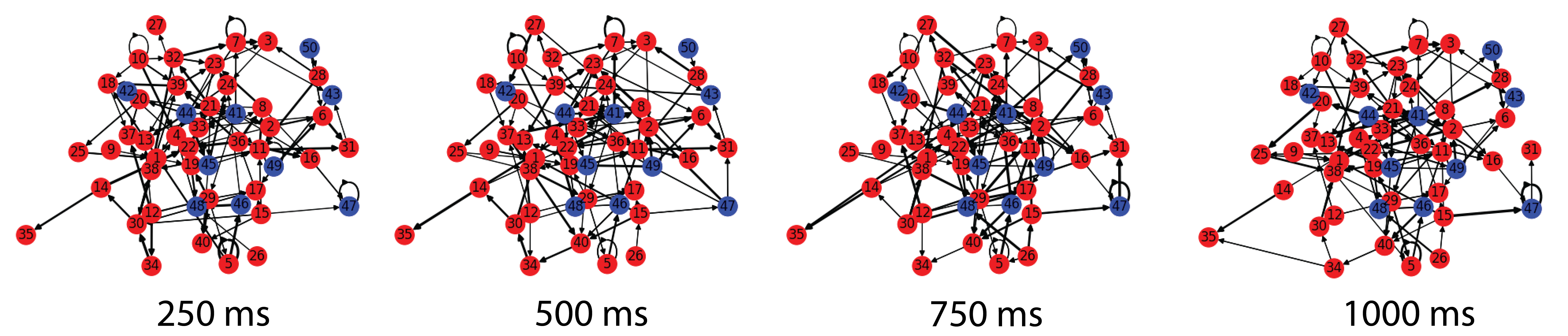}
    \caption{Dynamic edge weights inferred by thresholding learned attention coefficients over time, using trajectories from a NEST simulation consisting of 50 neurons (80\% excitatory labelled in red and 20\% inhibitory labelled in blue).}
    \label{fig:timelapseattn}
\end{figure}


\subsection{Gene regulatory network}

We used SERGIO (Single-Cell Expression Simulator Guided by Gene Regulatory Networks) \cite{dibaeinia_sergio_2020}, a tool developed by Dibaeinia and Sinha to simulate gene regulatory networks (GRNs). First, we defined a minimal network consisting of $5$ transcription factors that regulate cell differentiation. We then used SERGIO to build a gene regulatory network consisting of 100, 150 or 200 genes. We simulate this network to create time traces of a differentiation system with two branches. Additionally, we modified SERGIO to create perturbed time traces for training RiTINI. We compared the ability of RiTINI to infer gene regulatory interactions with other methods in Table \ref{tab:comparison} and found that it outperforms other methods. In the appendix we show that the attention-based inferred network moves from genes and transcription factors that are associated with the undifferentiated state to those that control differentiated states. 

\begin{figure}[h]
    \centering
    \includegraphics[width=0.6\linewidth]{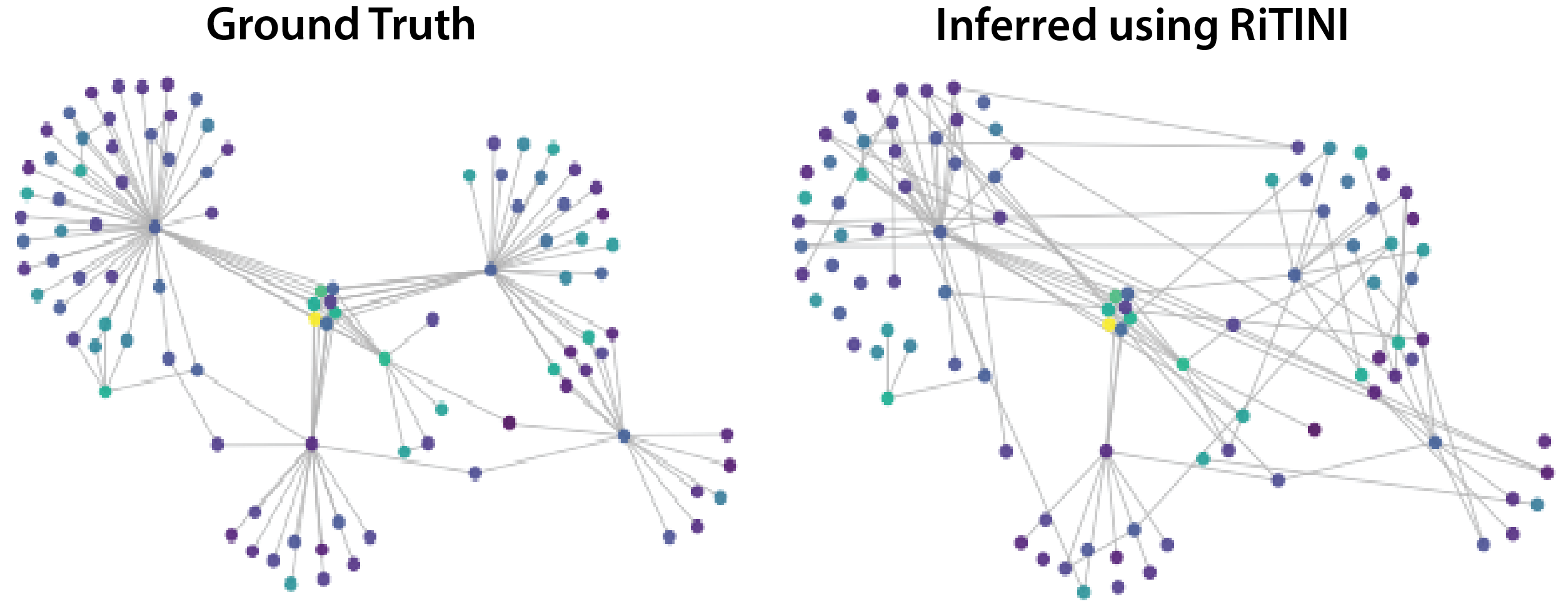}
    \caption{Ground truth GRN consisting of 100 genes and 137 edges from Table \ref{tab:comparison} and the static graph inferred using RiTINI by time-averaging the learned attention coefficients.}
    \label{fig:sergio}
\end{figure}

\section{Conclusion and Limitations}

We developed a novel method, RiTINI, designed to infer the underlying interaction graph from a multivariate time series datasets. This unique approach harnesses attention over space and time via  a graph ODE network, and operates in continuous time, to enable precise prediction and interpolation of the trajectories as well as inference of interaction graphs. Additionally, we show that our network can be trained with node perturbations for improved network inference. Notably, our method has proven to be competitive with state of the art methods across an array of real-world applications. For instance, in the fields of neurology, it has been effective in the recovery of neuronal networks, and in molecular biology, it has achieved state-of-the-art performance in the recovery of gene regulatory networks. 

One limitation of this work is that we do not model the inherent stochasticity of biological systems. For this, we could develop a version that uses SDE solvers, but would need to solve associated computational problems. 

\clearpage

\bibliographystyle{plainnat}
\bibliography{ref}

\clearpage

\appendix
\section{Interpolation of unperturbed and perturbed time traces using RiTINI}
\label{app: time-trace}

In the RiTINI architecture we leverage neural ODEs to predict time traces in both perturbed and unperturbed conditions. In Figure \ref{fig:timetrace} we demonstrate that our RiTINI architecture predicts time traces generated from NEST simulations of the neuronal network shown in Figure \ref{fig:NEST_inference} i.e. $50$ neurons (80\% excitatory labelled in red and 20\% inhibitory labelled in blue). In Figure \ref{fig:timetrace}, the ground truth traces obtained from the NEST simulations are plotted in blue. The neural ODE solution produced by RiTINI, in equally spaced training time intervals (denoted by red markers), is plotted in orange. These predicted time traces are in excellent agreement with the ground truth. Note that perturbations to the integrate-and-fire model parameters (described in Appendix Section \ref{app: nest} below) in neuron $1$ results in small deviations in its time trace, which propagate to the neighboring neuron $14$. However, no changes are observed in the time trace of neuron $5$, which is located far from the neighborhood of the perturbed neuron $1$. RiTINI correctly predicts the unperturbed and perturbed time traces of all neurons in this system. The accuracy of our model trained with and without perturbations can be found in Table \ref{tab:ablation}.


\begin{figure}[h]
\centering
\includegraphics[width=0.99\linewidth]{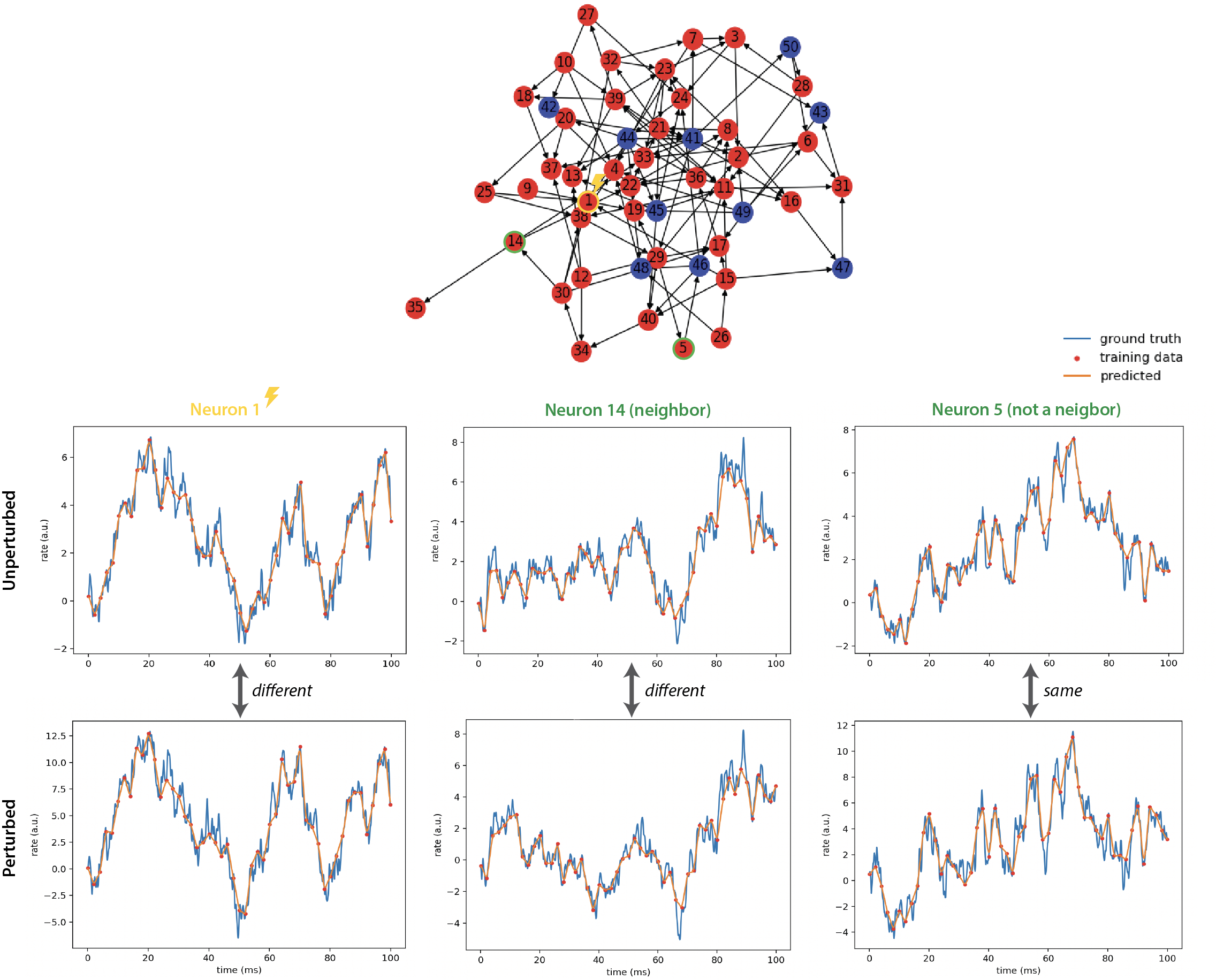}
\caption{Time traces generated from NEST simulation (ground truth) and RiTINI (predicted) of a neuronal network consisting of 50 neurons (same as Figure 2 in the main text) with and without perturbation applied to neuron 1 (highlighted in yellow)}
\label{fig:timetrace}
\end{figure}

\begin{table}[h]
\setlength{\tabcolsep}{1.25pt}
\centering
\scriptsize
\begin{tabular}{c c c c}
\toprule
\multicolumn{2}{c}{\textbf{Dataset}} & \multicolumn{2}{c}{\textbf{RiTINI}} \\
\cmidrule(lr){1-2}\cmidrule(lr){3-4}
Simulation Type & Ground Truth & Complete Model & Without Perturbations\\
\midrule
\multirow{3}{*}{\shortstack{Neuronal Network (NEST)\\(leaky integrate-and-fire)}} & $|\mathcal{V}|=40,|\mathcal{E}|=78$ & $25.4 \pm 2.8$ & $32.6 \pm 1.9$ \\
& $|\mathcal{V}|=50,|\mathcal{E}|=126$ & $35.2 \pm 2.3$ & $40.5 \pm 3.8$ \\
& $|\mathcal{V}|=75,|\mathcal{E}|=308$ & $\mathbf{69.6 \pm 7.7}$ & $80.1 \pm 5.7$ \\
\bottomrule
\end{tabular}
\caption{Mean and standard deviation of the graph edit distance between the inferred graph and the ground truth, across 3 different NEST simulations (lower is better).}
\label{tab:ablation}
\end{table}

\section{Description of baseline methods}
\label{app: other-methods}

\subsection{Granger Causality}

Granger causality is a statistical concept used to infer causality between variables in time series data \cite{granger_causality_1969}. The underlying principle of Granger causality is that if the past values of a variable $X$ can improve the prediction of another variable $Y$, then $X$ is said to ``Granger-cause'' $Y$. In other words, the inclusion of past values of $X$ provides additional information that enhances the prediction of $Y$. 

Mathematically, Granger causality can be expressed using autoregressive models (note that non-linear models can be considered in this context) \cite{wismuller_large-scale_2021}. Let $X_t$ represent the variable of interest at time $t$, and $X_{t-1}, X_{t-2}, \ldots, X_{t-p}$ denote its lagged values. Similarly, let $Y_t$ represent another variable at time $t$, and $Y_{t-1}, Y_{t-2}, \ldots, Y_{t-p}$ denote its lagged values. A linear autoregressive model for $X_t$ and $Y_t$ can be defined as:
\begin{align*}
X_t &= c_{X} + \sum_{i=1}^{p} \phi_{X,i} X_{t-i} + \sum_{i=1}^{p} \theta_{X,i} Y_{t-i} + \varepsilon_{X,t}\\
Y_t &= c_{Y} + \sum_{i=1}^{p} \phi_{Y,i} Y_{t-i} + \sum_{i=1}^{p} \theta_{Y,i} X_{t-i} + \varepsilon_{Y,t}
\end{align*}
where $c_X$, $c_Y$ are constants, $\phi_{X,i}$, $\phi_{Y,i}$ are coefficients for $X_{t-i}$ and $Y_{t-i}$ respectively, $\theta_{X,i}$, $\theta_{Y,i}$ are coefficients for the lagged values of the other variable, and $\varepsilon_{X,t}$, $\varepsilon_{Y,t}$ are error terms. According to this model, $X$ is said to "Granger-cause" $Y$ if any of the parameters $\theta_{X,i} \neq 0$.

In practice, one can assess the Granger causality from $X$ to $Y$, by comparing the predictive performance of two models: one including only the past values of $Y$ (null model), and another including the past values of both $X$ and $Y$ (full model). The improvement in prediction provided by the full model over the null model indicates Granger causality. 



\paragraph{Limitations} The fundamental prerequisite for Granger causality is separability, which entails that information regarding a causal factor is uniquely independent of the variable in question and can be eliminated by excluding that variable from the model. Separability is a distinguishing feature of linear and purely stochastic systems, and Granger causality can be valuable in identifying interactions among strongly coupled (synchronized) variables in nonlinear systems, but may fail when the coupling is weaker. The concept of separability aligns with the notion that systems can be comprehended incrementally, focusing on individual components rather than considering them as a whole \citep{sugihara2012detecting}.

\subsection{Optimal Causation Entropy}

The optimal causation entropy algorithm infers the causal network underlying the dynamics of a multivariate system by assessing the amount of information that flows from one variable to another. It builds upon the notion of (conditional) entropy of random variables.

The Shannon entropy of a continuous random variable $X$ is defined as
\begin{align}
h(X) = - \int p(x) \log p(x) dx,
\end{align}
where $p(x)$ is the probability density function of $X$.

The conditional entropy of $X$ conditioned on $Y$ is defined as 
\begin{align}
h(X \mid Y) = - \int p(x,y) \log p(x\mid y) dxdy,
\end{align}

\begin{definition}
The causation entropy from the set of nodes $J$ to the set of nodes $I$ conditioning on the set of nodes $K$ is defined as
\begin{align}
C_{J \to I | K} = h(X^{(I)}_{t+1}|X^{(K)}_t) -  h(X^{(I)}_{t+1}|X^{(K)}_t, X^{(J)}_t)
\end{align}
\end{definition}

The causation entropy is a generalization of transfer entropy that allows to condition to a set of variables. Intuitively, $C_{J \to I | K}$, represents the amount of information that the set of variables $J$ contributes to the set of variables $I$, conditioned on the set of variables $K$. A large value therefore indicates that the set of variables $J$ contains relevant information about $I$ that is not contained in $K$.

Using this definition, the optimal causation entropy algorithm proceeds in two steps. In the first step, a conservative set of causal parents for each node of the graph is constructed using the \emph{aggregative discovery of causal nodes} scheme. In the second step, the set of parents is pruned using the \emph{divisive removal of causal nodes} scheme.

\paragraph{Aggregative discovery of causal nodes.} In this step, for each node of the graph, we grow the set of potential parents by starting from empty sets $K$ and $J$. One then adds elements to $K$ by setting $K\leftarrow K \cup {j}$, where $j = \argmax C_{j \to I | K}$, for all $j \notin K$. The set of parents is grown until the  $C_{j \to I | K} \leq 0$, for all $j \notin K$.

\paragraph{Divisive removal of causal nodes.} The set of parents obtained from the first step is overly conservative. One then prunes this step to recover a better estimate of the list of causal parents of each node. For each node, we examine the elements of the set $K$ and remove all nodes $j$ such that which $C_{j \to I | K-\{j\}} = 0$.

\paragraph{Limitations.} The optimal causation entropy algorithm assumes the process is Markovian. That is the value of a variable at time $t$ can only depend on the causal parent at time $t-1$. In contrast, our approach allows to have different delayed representations of the signal impacting the dynamics of the system.

\subsection{Multivariate transfer entropy}

Like causation entropy, the transfer entropy is a non-parametric method that allows to quantify the asymmetric flow of information from one variable to another in the system. The transfer entropy from node $i$ to node $j$ is given by
\begin{align*}
T_{i\rightarrow j} = h(X_{t+1}^j \mid X_t^j) - h(X_{t+1}^j \mid X_t^j, X_t^i)
\end{align*}

The conditional transfer entropy, conditioned on a set of nodes $Z$ is given by 
\begin{align*}
T_{i\rightarrow j \mid Z} = h(X_{t+1}^j \mid X_t^j, Z) - h(X_{t+1}^j \mid X_t^j, X_t^i,Z)
\end{align*}

The algorithm to build the causal graph using the conditional transfer entropy proceeds in a greedy fashion. For each node $i$, we start with an empty set of causal parents $K$. One then adds elements to $K$ by setting $K\leftarrow K \cup {j}$, where $j = \argmax T_{j \to i | K}$, for all $j \notin K$. As the transfer entropy is monotonic in the number of elements in $K$, a stopping criterion is defined to ensure only \emph{significant} increases in the conditional transfer entropy are considered. The algorithm stops when no new nodes can significantly contribute to increasing the transfer entropy.

\paragraph{Limitations.} Being a non-parametric method, transfer entropy alleviates the problem of model misspecification. However, transfer entropy requires a lot of samples to be estimated reliably, which limits the applicability of this approach when only short time series are observed, which is common in biological applications.

\subsection{Multivariate mutual information}

Another variant of the above consists in using a normalized version of the transfer entropy:
\begin{align*}
C_{j \rightarrow i} = 1- \frac{h(X_{t+1}^i\mid \{ X_{t}^k , \forall k \in \mathcal{V} \})}{h(X_{t+1}^i\mid \{ X_{t}^k , \forall k \in \mathcal{V} -\{ j \} \})}
\end{align*}

This quantity is comprised between $0$ and $1$ and will be $0$ when variable $j$ does not contribute to variable $i$, and $1$ when variable $j$ completely determines variable $i$. One can extend this definition for a set of parent variables $K$:
\begin{align*}
C_{K \rightarrow i} = 1- \frac{h(X_{t+1}^i\mid \{ X_{t}^k , \forall k \in \mathcal{V} \})}{h(X_{t+1}^i\mid \{ X_{t}^k , \forall k \in \mathcal{V} - K \})}
\end{align*}

This quantity can then be used to construct the set of causal parent for each node $i$ in a greedy fashion, similarly as for the transfer entropy case.

\paragraph{Limitations.} Similarly as for the transfer entropy, reliable estimation of the mutual information requires a large number of samples, which limits the applicability of the method in biological applications.

\subsection{PC-algorithm}

The PC algorithm operates by finding conditional independence relations between pairs of variables. It starts by initialize an empty fully connected graph on all variables. It then reduces the fully connected graph to a skeleton graph by performing pairwise independence tests. If nodes $i$ and $j$ are independent, the edge between node $i$ and $j$ is removed. 

One the pairwise independence tests have been performed for all pairs of variables, one can further prune the undirected skeleton graph by using conditional independence test. For each pair of variables $(i,j)$, we test $X^i \perp X^j \mid Z$ for all sets $Z$ that contain neighbours of $i$ and $j$ and whose cardinality is lower than some threshold $d$. If a non-empty set $Z$ exists such that $X^i \perp X^j \mid Z$, the edge between $i$ and $j$ is removed. The separating set of $(i,j)$ is the largest set $Z$ that leads to independence. 

Lastly, the undirected skeleton is oriented to give the final causal graph. We first make a bidirectional graph by creating two directional edges between $i$ and $j$ if an edge exists in the skeleton. The orientation of the edge then happens according to 4 rules.

\begin{itemize}
\item (v-structures orientation): $i \leftrightarrow j \leftrightarrow k$ becomes $i \rightarrow  j \leftarrow k$ if $j$ is not in the separating set of $(i,k)$.
\item (new v-structures prevention): $ i \rightarrow j\leftrightarrow k$ becomes $i \rightarrow j \rightarrow k$ if $i$ and $k$ are not adjacent.
\item  (avoiding cycles): $i \rightarrow j  \rightarrow k \leftrightarrow i$ becomes $i \rightarrow j \rightarrow k \leftarrow i$.
\item  (combination): To avoid creating cycles or new v-structures, whenever $ i - j \rightarrow k$, $i - l \rightarrow k$, and $i - k$ but there is no edge between $j$ and $l$, the undirected $i - k$ edge becomes a directed edge $i \rightarrow k$.
\end{itemize}

\paragraph{Limitations.} The PC algorithm relies on the ability to effectively perform conditional independence tests. While these tests can be efficiently performed for binary data, they are notoriously difficult for continuous data. In fact the power of these tests is directly proportional to the number of samples available. The PC algorithm is thus not best suited for time series with few observations over time.

\section{Recovery of connectivity in dynamical systems}
\label{app: dyn}

Here we demonstrate that RiTINI can infer the relationship between variables in larger dynamical systems. We consider two such systems: a minimal model of collective firing in neuronal populations from the \emph{C. elegans} worm, and connectivity between cortical areas of the brain in macaque monkeys.

\begin{figure}[h]
    \centering
    \includegraphics[width=0.7\linewidth]{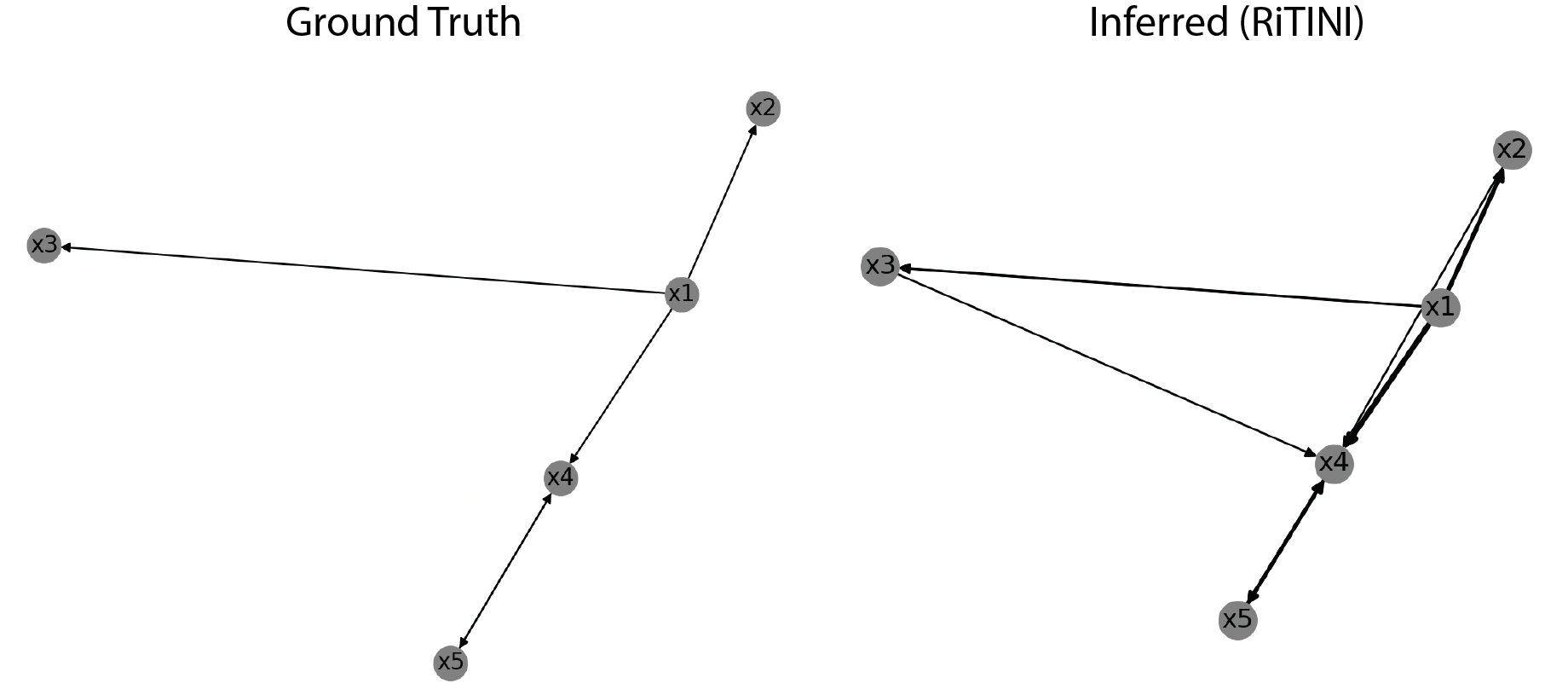}
    \caption{Static graph inferred from a nonlinear system of 5 coupled ODEs by time-averaging the learned attention coefficients.}
    \label{fig:fivenodeode}
\end{figure}

\paragraph{Wilson-Cowan model simulations of neural connectivity in \emph{C. elegans}}
The Wilson-Cowan model \cite{wilson1972excitatory, laurence2019spectral} is a computational framework widely used in neuroscience and theoretical biology to study the dynamics of large-scale neural populations. It provides a simplified description of the interactions between excitatory ($E$) and inhibitory ($I$) neurons in a network. The model assumes that the activity of a neural population can be represented by two variables: the average firing rate of excitatory neurons ($r_E$) and the average firing rate of inhibitory neurons ($r_I$). These variables are governed by a set of coupled differential equations that capture the balance between excitation and inhibition. The dynamics of the Wilson-Cowan model can be described as follows:
\begin{align*}
\frac{{dr_E}}{{dt}} &= \left( \alpha_E \cdot S_E(w_{EE} \cdot r_E - w_{EI} \cdot r_I) - \theta_E - r_E \right) / \tau_E \\
\frac{{dr_I}}{{dt}} &= \left( \alpha_I \cdot S_I(w_{IE} \cdot r_E - w_{II} \cdot r_I) - \theta_I - r_I \right) / \tau_I 
\end{align*}
where $\alpha_E$ and $\alpha_I$ represent the gain functions for excitatory and inhibitory neurons, $w_{EE}$, $w_{EI}$, $w_{IE}$, and $w_{II}$ are the synaptic weights, $\theta_E$ and $\theta_I$ are the threshold values, and $\tau_E$ and $\tau_I$ are the time constants for excitatory and inhibitory neurons, respectively. The functions $S_E$ and $S_I$ determine the activation response of excitatory and inhibitory neurons, which can be modeled using various functions such as sigmoidal or linear functions. The Wilson-Cowan model provides insights into the collective behavior of neural circuits, allowing researchers to study phenomena such as spontaneous oscillations, stable fixed points, and the impact of network parameters on neural dynamics. It serves as a valuable tool for understanding the complex interactions within neural populations and contributes to our understanding of brain function and information processing.

\begin{table}[h]
\setlength{\tabcolsep}{1.25pt}
\centering
\tiny
\begin{tabular}{c c c c c c c c}
\toprule
\multicolumn{2}{c}{\textbf{Dataset}} & \multicolumn{6}{c}{\textbf{Method}} \\
\cmidrule(lr){1-2}\cmidrule(lr){3-8}
Simulation Type & Ground Truth & GC & OCE & PC & mTE & mMI & RiTINI (ours) \\
\midrule
\multirow{1}{*}{Dynamic Mean-Field Model (DMF)} & $|\mathcal{V}|=82,|\mathcal{E}|=1560$ & $63.7 \pm 2.4$ & $85.9 \pm 6.8$ & $97.6 \pm 8.7$ & $61.5 \pm 3.8$ & $73.4 \pm 5.4$ &  $\mathbf{57.4 \pm 2.6}$\\
\multirow{1}{*}{Wilson-Cowan Neural Mass Model} & $|\mathcal{V}|=282,|\mathcal{E}|=2994$ & $95.0 \pm 8.8$ & $202.6 \pm 11.3$ & $194.8 \pm 5.4$ & $98.3 \pm 7.9$ & $107.5 \pm 9.6$ &  $\mathbf{86.8 \pm 4.1}$\\
\bottomrule
\end{tabular}
\caption{Mean and standard deviation of the graph edit distance between the inferred graph and the ground truth in dynamical system simulations (lower is better).}
\label{tab:largedyn}
\end{table}


\paragraph{Dynamical mean-field simulations of brain connectivity in macaques}
The Dynamic Mean-Field (DMF) model, as introduced by Deco and Jirsa (2012) \cite{deco2012ongoing}, is a framework used to understand the ongoing cortical activity in the brain at rest . The model is based on the idea that the brain operates in a state of criticality, which means it is poised at the edge of a phase transition between order and disorder. This critical state allows the brain to balance stability and flexibility, enabling it to respond to a wide range of inputs in a robust yet adaptable manner.

Notably the DMF model also introduces the concept of ``ghost attractors'', states that the brain can transition to but does not remain in for extended periods. Ghost attractors represent potential patterns of activity that the brain can access if needed, providing a reservoir of functional states that can be called upon in response to changing environmental demands.


The dynamic mean-field (DMF) model is a reduction of a spiking attractor network that consists of integrate-and-fire neurons with excitatory (NMDA) and inhibitory (GABA-A) synaptic receptor types. The neurons are organized into an inhibitory population (20\% of the neurons) and an excitatory population (80\% of the neurons).  DMF reduces this complex system by averaging the activity of large groups of neurons together, rather than trying to simulate each neuron individually, the global brain dynamics of which can be described by a set of coupled differential equations:
\begin{align*}
\frac{d S_i (t)}{d t} &= - \frac{S_i}{\tau_s} + \gamma (1 - S_i) H(x_i) + \sigma \nu_i(t) \\
H(x_i) &= \frac{ax_i - b}{1 - \exp^{- d (a x_i - b)}}\\
\frac{d x_i}{d t} &= w J_N S_i - G J_N \sum_{j} C_{ij} S_j + I_o     
\end{align*}
as detailed in Deco et al (2014) \cite{deco2014identification}: \(H(x_i)\) and \(S_i\) denote the population rate and the average synaptic gating variable at the local cortical area \(i\), \(w = 0.9\) is the local excitatory recurrence, \(G\) is a global scaling parameter, and \(C_{ij}\) is a connectivity matrix for neuroanatomical links between the cortical areas \(i\) and \(j\) . The connectivity matrix, $C_{ij}$, comes from 82 parcellated cortical areas extracted from macaque monkeys adopted in the CoCoMac-based non symmetric structural connectivity (SC) matrix \cite{bakker2012cocomac, shen2012information}.





\section{NEST Simulations}
\label{app: nest}

The leaky integrate-and-fire (IAF) model, as implemented in the NEST (Neural Simulation Tool) toolkit (denoted \texttt{iaf\_psc\_alpha} in the NEST documentation), provides a computationally efficient method to simulate the behavior of neurons in a network \cite{sinha_ankur_2023_6867800}. Here, the IAF model integrates input signals and generates an output when the integrated signal reaches a threshold. Mathematically, it is described by a differential equation,
\begin{equation}
\tau_m \frac{\delta V_m}{\delta t} = -(V_m - E_L) + \frac{I_{e}}{C_m},
\end{equation}
where \(V_m\) is the membrane potential, \(\tau_m\) is the membrane time constant, \(E_L\) is the resting membrane potential, \(I_{e}\) is the synaptic current, and \(C_m\) is the membrane capacitance. For a full list of settable parameters, see Table \ref{tab: lif-parameters}.

\begin{table}[h]
\begin{centering}

\begin{tabular}{lll}
\textbf{Parameter}        & \textbf{Units} & \textbf{Description}                                         \\
$V_m$            & mV    & Membrane potential                                  \\
$E_L$            & mV    & Resting membrane potential                          \\
$C_m$            & pF    & Capacity of the membrane                            \\
$\tau_m$         & ms    & Membrane time constant                              \\
$t_ref$          & ms    & Duration of refractory period                       \\
$V_{th}$         & mV    & Spike threshold                                     \\
$V_{reset}$      & mV    & Reset potential of the membrane                     \\
$\tau_{syn\_ex}$ & ms    & Rise time of the excitatory synaptic alpha function \\
$\tau_{syn\_in}$ & ms    & Rise time of the inhibitory synaptic alpha function \\
$I_e$            & pA    & Constant input current                              \\
$V_{min}$        & mV    & Absolute lower value for the membrane potential     
\end{tabular}
\caption{Settable parameters for the \texttt{iaf\_psc\_alpha} model from NEST.}\label{tab: lif-parameters}
\end{centering}
\end{table}

Although this IAF model in NEST neglects certain biological specifics, such as the exact shape of the action potential or various ion channel effects, its abstraction level allows for an efficient simulation of large neural networks.

\end{document}